\newcommand{\UnivLit}[0]{\mathbb{L}}
\newcommand{\REMOVED}[2]{L_{{#1}}^{\bar{{#2}}}}
\newcommand{\LIT}[0]{{\cal{L}}}
\newcommand{\F}[0]{{\cal{F}}}
\newcommand{\FSpace}[0]{{\mathbb{F}}}
\newcommand{\Outcome}[0]{{\cal{V}}}
\newcommand{\dom}[1]{{{D_{#1}}}}
\newcommand{\nbfeatures}[0]{m}
\newcommand{\BACK}[0]{{\cal{B}}}
\newcommand{\DS}[0]{{\cal{M}}}
\newcommand{\rules}[0]{{\Delta}}
\newcommand{\conflict}[2]{{#1} \ominus {#2}}
\newcommand{\union}[1]{G_{#1}}
\newcommand{\cover}[3]{{Cover({#1},{#3},{#2})}}
\newcommand{\SA}[0]{\Omega_i}
\newcommand{\SB}[0]{\Theta_i}
\newcommand{\SC}[0]{\Xi_i}
\newcommand{\SD}[0]{\Upsilon_i}
\definecolor{midgrey}{rgb}{0.5,0.5,0.5}
\definecolor{middarkgrey}{rgb}{0.35,0.35,0.35}
\definecolor{darkgrey}{rgb}{0.3,0.3,0.3}
\definecolor{darkred}{rgb}{0.7,0.1,0.1}
\definecolor{midblue}{rgb}{0.2,0.2,0.7}
\definecolor{middarkblue}{rgb}{0.15,0.15,0.575}
\definecolor{darkblue}{rgb}{0.1,0.1,0.5}
\newcommand{\jnoteF}[1]{}
\newcommand{\mnoteF}[1]{}
\newcommand{\jpnoteF}[1]{}
\newcommand\nparagraph{%
  \@startsection{paragraph}
    {4}
    {\z@}
    {1.5ex \@plus0.5ex \@minus.2ex}
    {-1em}
    {\normalfont\normalsize\bfseries}%
}
\renewcommand{\paragraph}{\nparagraph}
\newcommand{\stdmathcal}[1]{\text{\usefont{OMS}{cmsy}{m}{n}#1}}
\newcommand{\fml}[1]{{\stdmathcal{#1}}}
\newcommand{\mbb}[1]{\ensuremath\mathbb{#1}}
\newcommand{\limply}{\ensuremath\rightarrow}
\newtheorem{definition}{Definition}
\newtheorem{lemma}{Lemma}
\newtheorem{theorem}{Theorem}
\newtheorem{proposition}{Proposition}
\newtheorem{example}{Example}
\newtheorem{corollary}{Corollary}
\newcommand{\institute}[1]{\newcommand{\@institute}{#1}}
\providecommand{\keywords}[1]{\textbf{Keywords:} #1}
\begin{document}

\title{On Trustworthy Rule-Based Models and Explanations}



\author[1]{Mohamed Siala}
\author[2]{Jordi Planes}
\author[3]{Joao Marques-Silva}

\affil[1]{LAAS-CNRS, Universit\'e de Toulouse, CNRS, INSA Toulouse, France}
\affil[2]{Universitat de Lleida}
\affil[3]{ICREA \& University of Lleida}


\maketitle              

\begin{abstract}
  A task of interest in machine learning (ML) is that of ascribing
  explanations to the  predictions made by ML models.
  Furthermore, in domains deemed high risk, the rigor of
  explanations is paramount. Indeed, incorrect explanations can and
  will mislead human decision makers.
  As a result, and even if interpretability is acknowledged as an
  elusive concept, so-called interpretable models are employed
  ubiquitously in high-risk uses of ML and data mining (DM).
  This is the case for rule-based ML models, which encompass decision
  trees, diagrams, sets and lists.
  This paper relates explanations with well-known undesired
  facets of rule-based ML models, which include negative overlap
  and several forms of redundancy. 
  The paper develops algorithms for the analysis of these undesired
  facets of rule-based systems, and concludes that well-known and
  widely used tools for learning rule-based ML models will induce rule
  sets that exhibit one or more negative facets.
  
  \keywords{Explainability \and Interpretability \and Rule-based models \and Formal Methods.}
\end{abstract}

\section{Introduction}~\label{sec:intro}

Explainable Artificial Intelligence (XAI) is a mainstay of trustworthy
AI~\cite{berrada-ieee-access18,pedreschi-acmcs19,cardoso-electronics19,ranjan-acmcs23,finzel-dmkd24}. 
Furthermore, in domains that are deemed of high risk, explanations
should be trustable~\cite{rudin-naturemi19,rudin-ss22,11-rules-Huysmans,14-Freitas}.
%
The importance of explanations and the need to trust those
explanations motivated work on so-called interpretable
models~\cite{rudin-naturemi19,rudin-ss22}, even though it is generally
accepted that a rigorous definition of interpretability is
elusive at best~\cite{lipton-cacm18}.
Rule-based models, which encompass decision
trees~\cite{breiman-bk84}, {diagrams~\cite{22-bdd,23-dd}, 
sets~\cite{clarkn89,16-Lakkaraju-kdd,21-ds-alexey} and
lists~\cite{rivest-ml87,22-gmm-jair}, epitomize interpretable models.}

Work on the induction of rule-based models can be traced at least to
the 1970s~\cite{shwayder-cacm71,rivest-ipl76}, in the concrete case of
decision trees.
\footnote{
Although extremely popular in ML and DM, decision trees found earlier
uses in other domains,
e.g.\ \url{https://en.wikipedia.org/wiki/Phylogenetic_tree} and
\url{https://en.wikipedia.org/wiki/Decision_tree}.
}
Decision trees are widely used in practice and often exemplify
interpretable models~\cite{rudin-naturemi19,rudin-ss22}. The perceived
importance of interpretability has recently motivated the development
of algorithms for learning optimal decision
trees{~\cite{demirovic-jmlr22,20-dt-hu}}.
Decision sets (or rule sets) find a wide range of uses in different
domains~\cite{fuernkranz-bk12,fuernkranz-ruleml15,fuernkranz-ecml20,fuernkranz-ruleml20,fuernkranz-adac24,fuernkranz-dmkd24}. As
with decision trees, there has been recent interest in learning
optimal decision sets~\cite{16-Lakkaraju-kdd}.
Decision lists also find many practical uses, but claims about their 
interpretability are harder to justify~\cite{MarquesSilvaI23}.
As a result, this paper studies decision sets, but also decision trees
when viewed as a special case of decision sets.

At present, some of the best-known ML toolkits implement one or more
methods of induction of rule-based
models~\cite{pedregosa-jmlr11,fuernkranz-ecml20,zupan-jmlr13}.
Nevertheless, it has been argued~\cite{MarquesSilvaI23} that rule-based
methods, although easier to fathom by human-decision makers, still
require explanations to be computed. (Otherwise, human decision-makers 
would be expected to manually solve NP-hard function
problems~\cite{MarquesSilvaI23}.)
Therefore, a key question is: \emph{for rule-based models, when can
explanations be computed trivially, such that a human decision-maker 
can manually produce an explanation?
}

This paper shows that rigorous explanations can be found manually
whenever some undesired facets of decision sets are nonexistent.
Concretely, the paper relates easy-to-compute explanations with the
non-existence of \emph{negative overlap}, i.e.\ the existence of cases
where two or more rules can fire that predict different values.
Furthermore, the non-existence of redundant literals in rules is shown
to be a necessary condition for minimality of explanations.

Given this state of affairs, the paper then investigates whether
existing ML toolkits are able to learn rule-based models that avoid
the aforementioned negative facets. As the results demonstrate, this
is not the case.
In addition, the paper investigates whether model-agnostic methods
targeting feature selection (i.e.\ that produce rules as explanations)
are capable of preventing negative overlap (i.e.\ the most worrisome
negative facet).
Unfortunately, as the results show, this is also not the case with the
well-known explainer Anchor~\cite{18-aaai-Ribeiro}.


\jnoteF{This is hard to write because the message is not yet explicit.}


\paragraph{Contributions.}
%
The paper studies decision sets,%
\footnote{Decision trees are a special case of a decision set, and so
we also present experiments on decision trees. However, we opt not
to address decision lists due to the intrinsic difficulties with their
explanation~\cite{MarquesSilvaI23}.}
concretely the problem of \emph{negative overlap}, i.e.\ when two
rules that predict different classes fire, but also the existence of
local or global redundancies of literals in rules.
The paper develops algorithms for deciding the existence of negative
overlap, but also for deciding local and global redundancy.
Furthermore, the results in the paper take into account possible
constraints on the inputs.
The paper then relates these negative facets of decision sets with the
ability of human decision-makers to manually produce rigorous
explanations, namely abductive explanations. In addition, the
experiments confirm that implemented rule-learning algorithms in
well-known toolkits exhibit the negative facets of decision sets,
thus complicating (complexity-wise) the computation of rigorous
explanations.

\paragraph{Organization.}
%
The paper is organized as follows. \cref{sec:prelim} introduces the
notation and definitions used throughout the paper.
\cref{sec:relw} briefly comments on related work.
\cref{sec:rdd} details the paper's main contributions.
\cref{sec:res} reports on the experimental results.
Finally, \cref{sec:conc} concludes the paper.


\section{Background}~\label{sec:prelim}

The notation and definitions used throughout the paper are adapted
from past
works~\cite{16-Lakkaraju-kdd,2021-sat-handbook-biere,22-jair-izza}. 

\paragraph{Propositional Logic and
  Generalizations~\cite{2021-sat-handbook-biere}.}

{
    Let $X= \{x_1, \ldots, x_n \}$ be a set of Boolean variables.
A literal is a Boolean variable or its negation. 
A clause $C$ is a disjunction of literals and a cube $L$ is a conjunction of literals. 
We use the notation $l_i \in C$ (respectively $l_i \in L$) 
if $C= l_1 \vee \ldots \vee l_k$ (respectively $L= l_1 \wedge \ldots \wedge l_k$).
A conjunctive normal form (CNF) formula $F$ is a conjunction of clauses. 
That is, $F= C_1 \wedge \ldots \wedge C_k$ where $C_j$ is a clause. 
In this case, we use the notation $C_j \in F$. 
Note by definition that a clause/cube is a CNF. 
An assignment $v= (v_1, \ldots v_n )$ is a point in $\{0,1\}^n$. 
If $F=C_1 \wedge \ldots \wedge C_k$ is a CNF, $v \models F$ iff 
$\forall C_j \in F, \exists x_i \in C_j$ such that $v_i=1$ 
or $\exists~\neg x_i \in C_j$ such that $v_i=0$. 
If $\exists~v \in \{0,1\}^n$ such that $v \models F$ 
then $F$ is said satisfiable, otherwise unsatisfiable. 
If $F_1$ and $F_2$ are two CNF formulas,
 $F_1 \models F_2$ iff
 $v \models F_1 \implies v \models F_2$. 
Note that $F_1 \models F_2$ iff $F_1 \wedge \neg F_2$ is unsatisfiable. 
Given a CNF formula $F$, the satisfiability problem (SAT) 
asks if $F$ is satisfiable.
SAT solvers are highly deployed in practice to answer SAT related queries, 
such as finding satisfying assignments or proving
unsatisfiability~\cite{2021-sat-handbook-biere}.
Furthermore, extensions of propositional to more expressive logics can
be handled by considering Satisfiability Modulo Theories
(SMT)~\cite{2021-sat-handbook-biere}.
}

\jnoteF{Should we change the title of the paragraph to something less
  specific? Perhaps ``\emph{Automated reasoning}'' or
  ``\emph{Constraint programming}'' or ...?
}

\mnoteF{It was possible to write the theoretical part in a more general setting but 
 to avoid length-related issues
and for simplicity (especially for this conference),
I wrote everything with SAT and therefore the background in SAT to more concise. 
But of course feel free to edit! 
}

\jnoteF{I am not convinced. The examples used do not exhibit boolean
  features. Framing the discussion at the boolean level could raise
  objections from the reviewers.}

  \mnoteF{The literals that are used in the decision sets are Boolean. 
  However, the features of the dataset are not necessary Boolean. 
  For instance, in the decision set we can have the literal "x > 0.8837667" 
  where $x$ is a numerical feature. 
  }
  
  \mnoteF{I can change the "Boolean Satisfiability" 
  paragraph to "Propositional Logic." 
  and talk about propositional formulas 
  in general instead the cnf restriction. 
  }

\paragraph{Machine Learning.}

We consider rule-based models for classification and regression 
{that can be represented as 
a set of unordered rules.}
Let $\F= \{1,\ldots \nbfeatures\}$ be a set of features where 
each feature $i$ takes values from a domain $\dom{i}$. 
The feature space is the Cartesian product of the domains 
$\FSpace = \dom{i} \times \ldots \times \dom{m}$. 
The outcome space (i.e., classes for classification and numerical values for regression)
is denoted by $\Outcome$. 
{A dataset is a set  
$\{ (x,o) ~|~ x \in \FSpace \wedge o \in \Outcome \}$, and where
  $x=(x_1,\dots,x_m)$.
}
A literal represents a condition on the values of a feature.
We use $\UnivLit$ to represent the universe of literals. 
{A background knowledge  
$\BACK$ is a propositional formula over literals
from $\UnivLit$ that specifies
the conditions that any arbitrary point in feature
 space must comply with.
In other words, a point in feature space
 $x$ is \emph{valid} 
iff $x \models \BACK$.}
{We assume in the rest of the paper that $\BACK$ is given as a CNF.}
For example, consider a dataset representing
 individuals and the two literals
$l_1 := employed$, $l_2 := salary > 50k$. 
The background knowledge $\BACK$ can contain the clause 
$l_1 \vee \neg l_2$ to model the fact that an unemployed individual 
cannot have a salary greater than $50k$. 
Note that $\BACK$ can be a tautology
 (for instance when no condition is given).
 In this case, any arbitrary point in feature space is a valid. 
A user can also miss certain constraints she is not aware of. 
{
    Let $\lambda \notin \Outcome$ be a dummy value. 
A supervised ML (classification or regression) model $\kappa$ 
is a mapping from $\FSpace$ to $\{\lambda\} \cup \Outcome$ such that 
$\kappa(x) = \lambda$ iff $ x \not\models \BACK$.
}

{A rule $R_i$ is a pair $(L_i,o_i)$ such that
$L_i$ 
is a conjunction of literals (i.e., cube)
 from $\LIT \subseteq \mathbb{L}$ and $o_i \in \Outcome$. 
 $R_i$ fires on $x \in \FSpace$ iff $x\models L_i$. 
With a slight abuse of notation we shall sometimes 
use $L_i$ as the subset of $\LIT$ formed by the literals in $L_i$. 
A decision set $\DS$ is a set of rules $\DS= \{R_1, \ldots , R_r  , R_{r+1} \}$
such that $\forall i \leq r, L_i \neq \emptyset$ 
and $L_{r+1}= \emptyset$. 
$R_{r+1}$ is called the default rule. 
We denote $\rules(o)$ the set $\{R_i | o_i = o \} $. 
$\DS$ is used as an ML model $\kappa_{\DS}$ as follows:
$$
\kappa_{\DS}(x)=
\begin{cases} 
    \lambda \notin \Outcome & \text{if } x \not\models \BACK \\
    o_{r+1} & \text{if no rule fires on $x$}\\
    o & \text{if } \{o\} = \{o_i ~|~ R_i \text{ fires on } x \} \\ 
    \text{Tie-breaking strategy otherwise} 
\end{cases} 
$$

{
Note that decision trees (DTs), decision diagrams (DDs), random forests (RFs) 
and boosted trees (BTs), can be seen as decision sets
where each path represents a rule. 
Clearly, in such models the default rule never fires. 
In the case of DTs and DDs, each input 
fires exactly one rule (since it follows exactly one path). 
Thus, no tie-breaking strategy is needed. 
This is not the case with RFs and BTs since each input fires one rule on each tree. 
Therefore, a tie-breaking strategy is needed. 
}

\mnoteF{The following comment is addressed. Let me know if it is clear with the new definition.}

\jnoteF{%
  Regarding the definition of $\kappa$ above:\\
  When prediction is $o_{i+1}$, what is $o_{i+1}$, i.e.\ what is $i$
  in that case?\\
  When prediction is $o_i$, why must $R_i$ be the \emph{only} rule? If
  two or more rules predict $o_i$, then prediction should be $o_i$.\\
  We only need tie break two or more rules fire and their predictions
  differ.\\
  This should be clarified...
}

We extend the notion of cover and overlap 
from~\cite{16-Lakkaraju-kdd} 
by considering the background knowledge $\BACK$ and the input space. 

\begin{definition}[Cover]
    Given $X\subseteq \mathbb{F}$ and background knowledge $\BACK$, 
    $\cover{X}{L}{B} = \{x ~|~ x \in X \wedge x\models \BACK \wedge x \models L\}$.
\end{definition}

\begin{definition}[Overlap]
    Given a background knowledge $\BACK$, 
    two rules $R_i$ and $R_j$ such that $i, j \leq r$ overlap in $ X \subseteq \mathbb{F}$ 
    iff $\cover{X}{L_i}{B} \cap \cover{X}{L_j}{B} \neq \emptyset $.
\end{definition}

We say that $R_i$ and $R_j$ positively (respectively negatively) overlap 
if they overlap and $o_i = o_j$ (respectively $o_i \neq o_j$). 
We use the notation $R_i \ominus R_j$ if $R_i$ and $R_j$ negatively overlap. 
{
  Observe that DTs and DDs exhibit no overlap since each 
input is captured by exactly one rule. 
This is not the case for RFs and BTs, since 
each input fires exactly one rule from each tree. 
Thus, overlaps may occur only between rules from different trees. 
}

\paragraph{Formal Explanations~\cite{22-jair-izza,darwiche-lics23}.}
%

Most approaches to explainability target at instance, i.e.\ a pair
$(x,c)$ with $x\in\mbb{F}$ and $c\in\fml{V}$. 
We use $\kappa$ throughout the paper to denote a machine learning model. 
Given an instance $(v,c)$, with $c=\kappa(v)$, a weak abductive
explanation (WAXp) is a subset $\fml{X}$ of the features $\fml{F}$
which, if assigned the values dictated by $v$, is sufficient for the
classifier to output prediction
$c=\kappa(v)$~\cite{22-jair-izza,darwiche-lics23}:
\begin{equation} \label{eq:waxp}
  \forall(x\in\mbb{F}).\left[\bigwedge\nolimits_{i\in\fml{X}}(x_i=v_i)\limply(\kappa(x)=c)\right]
\end{equation}
A subset-minimal WAXp is an \emph{abductive explanation} (AXp).
Recent work demonstrated the need for explaining interpretable models,
including decision trees~\cite{22-jair-izza} and
lists~\cite{MarquesSilvaI23}.
To the best of our knowledge, past work did not investigate
formal explanations for decision sets.

Furthermore, the definition of WAxp (see~\eqref{eq:waxp}) can be
generalized to account for literals involving other relational
operators~\cite{22-jair-izza} (e.g.\ relational operators taken from
$\{\in,\ge,>,<,\le\}$). In addition, constraints on the
inputs~\cite{rubin-aaai22,24-ijcai-Audemard} can be accounted for by conjoining a set of
constraints $\fml{C}_{\fml{B}}$. For example, these constraints allow
capturing the background knowledge introduced earlier in this section.
Concretely, we write that $\fml{C}_{\fml{B}}(x)$ holds true iff $x$
respects the background knowledge, i.e.\ $x\models \BACK$.

\jnoteF{We should agree how to represent points of $\mbb{F}$. If we do
  not use boldface, then $x$ and $x_i$ are easy to be confused with
  each other.}

\section{Related Work}\label{sec:relw}

The learning of rule-based models has been the subject of research
since the 1970s~\cite{shwayder-cacm71,rivest-ipl76}.
The importance of the topic, especially given their widely accepted
interpretability, has motivated recent work on learning decision
sets~\cite{fuernkranz-ecml20,fuernkranz-adac24,fuernkranz-dmkd24} and
(optimal) trees~\cite{demirovic-jmlr22}.
These earlier works were motivated by the accepted belief that
decision trees, sets and lists are
interpretable~\cite{breiman-ss01,rudin-naturemi19,rudin-ss22}.
Accounts of methods for learning decision sets and lists
include~\cite{fuernkranz-bk12,fuernkranz-ruleml15}.

Motivated by the elusive nature of
interpretability's definition~\cite{lipton-cacm18}, recent
work~\cite{MarquesSilvaI23} uncovered practical difficulties in
computing and/or using so-called interpretable models as
explanations. For example, it has been shown that paths in decision
trees can be arbitrarily redundant (on the number of features) when
compared with an AXp~\cite{22-jair-izza}. Similarly, the computation
of an AXp for a decision list equates with solving an NP-hard
problem~\cite{MarquesSilvaI23}, i.e.\ something that is in general
beyond the capabilities of a human decision-maker.
Nevertheless, past work did not address formal explanations for
decision sets, arguably because of the existence of 
negative overlap.

Although the paper assesses rule-based methods using formal
explanations, XAI is better-known by the use of 
model-agnostic
methods~\cite{berrada-ieee-access18,pedreschi-acmcs19,cardoso-electronics19,nguyen-air22,ranjan-acmcs23,finzel-dmkd24}. Well-known
examples include LIME~\cite{guestrin-kdd16},
SHAP~\cite{lundberg-nips17} and
Anchors~\cite{18-aaai-Ribeiro}.
Since so-called interpretable models have been proposed for high-risk
uses of ML, we focus on rigorous (i.e.\ formal) explanations.

The main results of this paper, namely the direct relationship between
easy-to-compute explanations and the non-existence of well-known
negative facets of rule-based models, are novel. The observation that
rule-based models, obtained with well-known toolkits, exhibit those
negative facets, is also a novel result, to the best of our knowledge.

%

\section{Overlap and Redundancy}~\label{sec:rdd}
In this section, we let $\BACK$ be a background knowledge 
and $\DS= \{R_1, \ldots, R_r, R_{r+1}\}$
be a decision set where $R_{r+1}$ is the default rule such that 
each rule $R_{i\leq r}$ fires on at least one valid input (w.r.t. $\BACK$). 
As mentioned in the introduction, we provide 
a formal framework to address the following questions: 
(i)~How can we generate all (negative) overlap?; 
(ii)~Is rule $R_i$ redundant in $\DS$?; 
and (iii)~Is literal $l$ redundant in a given rule?. 

We use Example~\ref{example:ds} throughout the paper to illustrate 
the different concepts.
\begin{example}\label{example:ds}
$\BACK$ is background knowledge that encodes the following constraints (in a CNF):
    $(salary > 0) \leftrightarrow (age \geq 18) $; 
     $ (size = 140) \rightarrow (size > 120) $; 
     $(weight > 90) \rightarrow (weight \geq 85) $; and 
         $(weight \geq 85) \rightarrow (weight > 80) $.
The decision set contains the following rules:
\begin{itemize}
    \item $R_1 = ((salary > 0) \wedge (size \neq 140) \wedge (age > 10)
     \wedge (color = blue) \wedge (weight > 80),\ 1)$
    
     \item $R_2 = ((salary > 0) \wedge (size = 140),\ 1)$
    \item  $R_3 = ((salary > 0) \wedge (weight > 90),\ 1)$
    \item $R_4 = ((size > 120) \wedge (weight < 85),\ 0)$
\end{itemize}
\end{example}

\subsection{Overlap}\label{subsec:overlaps}
We start by giving a sufficient and 
necessary condition to check if two rules negatively overlap. 
\begin{lemma}[Overlap Check]\label{lemma:ds_negative_overlap}
    Two rules $R_i$ and $R_j$ overlap 
     iff  
    $ \BACK \wedge L_i \wedge L_j$ is satisfiable.
\end{lemma}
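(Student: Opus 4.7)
The plan is to prove both directions by simply unfolding the definitions of overlap and cover from the preceding subsection. I will take the implicit input space to be $X = \mathbb{F}$ (the full feature space), since no $X$ is mentioned in the lemma statement; the argument adapts without change if a specific $X$ is given, provided satisfiability is checked relative to $X$.

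For the forward direction, I will assume $R_i$ and $R_j$ overlap. By the definition of overlap, $\cover{\mathbb{F}}{L_i}{B} \cap \cover{\mathbb{F}}{L_j}{B} \neq \emptyset$, so there exists some $x \in \mathbb{F}$ with $x \models \BACK$, $x \models L_i$, and $x \models L_j$. Since $\BACK$, $L_i$, and $L_j$ are all propositional formulas over literals from $\mathbb{L}$, the assignment induced by $x$ is a model of their conjunction, hence $\BACK \wedge L_i \wedge L_j$ is satisfiable.

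For the backward direction, I will assume $\BACK \wedge L_i \wedge L_j$ is satisfiable, and pick a satisfying assignment $v$. Because $v$ assigns a value to every relevant literal (and, since the variables correspond to feature conditions, induces a point in $\mathbb{F}$), this $v$ lies in $\mathbb{F}$ and by construction satisfies $\BACK$, $L_i$, and $L_j$. Therefore $v \in \cover{\mathbb{F}}{L_i}{B} \cap \cover{\mathbb{F}}{L_j}{B}$, so the two covers intersect and the rules overlap.

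The only mild subtlety, and the one thing worth being careful about, is the correspondence between satisfying assignments of the propositional encoding and actual points of the feature space $\mathbb{F}$: features may be non-Boolean (numerical, categorical), and literals are conditions such as $\textit{salary} > 0$ or $\textit{size} = 140$. I will briefly justify that a model of $\BACK \wedge L_i \wedge L_j$ can be realized by some $x \in \mathbb{F}$ (and conversely, any $x \in \mathbb{F}$ induces a truth assignment to the literals), which is the standard bridge between the logical encoding and the concrete feature space. Once this bookkeeping is stated, the lemma follows immediately. There is no real obstacle; the content is essentially a reformulation of the overlap definition in terms of propositional satisfiability, which is what enables the SAT-based algorithmic treatment developed in the rest of the section.
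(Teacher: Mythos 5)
Your proposal is correct and follows essentially the same route as the paper's proof, which likewise unfolds the definitions of cover and overlap over the full feature space $\FSpace$ and identifies satisfying assignments of $\BACK \wedge L_i \wedge L_j$ with points in the intersection of the two covers. The only difference is that you explicitly flag the bridge between propositional models and feature-space points, which the paper's proof leaves implicit; that is a reasonable refinement but not a different argument.
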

\begin{proof}
    $ \BACK \wedge L_i \wedge L_j$ is satisfiable iff 
    $\exists x\in \FSpace, x\models  \BACK \wedge  L_i$ 
    and $x\models \BACK \wedge L_j$. 
    This is equivalent to $\exists x\in \FSpace, \{x \} \in \cover{\FSpace}{L_i}{B} \cap \cover{\FSpace}{L_j}{B}$.
    The latter means that $R_i$ and $R_j$ overlap. \qed
 \end{proof}

In Example~\ref{example:ds}, 
one can use Lemma~\ref{lemma:ds_negative_overlap}
 to show that $R_3$ and $R_4$ do not overlap,
  in contrast to $R_1$ and $R_4$, which do.

We consider now the question of generating all negative overlap. 
 Algorithm~\ref{algo:overlap} finds all pairs of rules that exhibit 
 a negative overlap. 
 We use $GetList(o_1,o_2, \ldots o_r)$ 
as a function that computes a list that contains 
the distinct values in $\{o_1, \ldots o_r\} $.  

\begin{algorithm}[t]
   \caption{Negative Overlap Pairs\label{algo:overlap}}
   \begin{algorithmic}[1]
       \STATE \textbf{Function: } $Pairs$ 
       \STATE \textbf{Input:} $\FSpace, O,  \DS=\{R_1, \ldots R_r \}, \BACK$ 
       \STATE \textbf{Output:} 
        $  \Pi = \{ (i,j) ~|~ \conflict{R_i}{R_j} \} $  \\

       \STATE $\Pi = \emptyset$
       \STATE $\Psi=GetList(o_1,o_2, \ldots o_r)$\\
        \STATE $g = | \Psi |$ 
       \FOR{$a$ in $1, \ldots g-1 $} 
            \FOR{$b$ in $a+1, \ldots g$} 
                \FOR{$R_i$ in $\rules({\Psi(a)})$} 
                    \FOR{$R_j$ in $\rules({\Psi(b)})$} 
                       \IF{$ \BACK \wedge L_i \wedge L_j$ is SATISFIABLE}
                            \STATE $\Pi \gets \Pi \cup  \{ (i,j)\} $ \label{line:check}
                        \ENDIF
                    \ENDFOR
                \ENDFOR
            \ENDFOR
        \ENDFOR
       \STATE \textbf{Return} $\Pi$
   \end{algorithmic}
   \end{algorithm}

   Algorithm\ref{algo:overlap} terminates because each pair of rules will be visited at most once. 
   The correctness  of Algorithm\ref{algo:overlap} 
   follows from the fact that each pair $(R_i,R_j)$ such that $o_i \neq o_j$
   is visited exactly once in Line~\ref{line:check}. 
   The complexity of Algorithm~\ref{algo:overlap} is $O( |\DS|^2 \times f(\DS))$
   where  $f(M)$ is the worst complexity of 
   $\BACK \wedge L_i \wedge L_j$ for an arbitrary pair of rules $(R_i, R_j)$. 
   This observation follows from the fact that 
   computing $GetList$ can be naturally be done in 
   $O(|\DS|) $ and the fact that 
   the satisfiability check in Line~\ref{line:check} is called at most once
   for each pair $(R_i,R_j)$. 

Finally, one might ask whether the default rule can be triggered. 
Proposition~\ref{prop:total} shows that this can be achieved with one SAT call.

\begin{proposition}[Default Rule Application]\label{prop:total}
The default rule is triggered iff 
$\BACK \wedge \neg L_1 \ldots \wedge \neg L_r$  is satisfiable. 
\end{proposition}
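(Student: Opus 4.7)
The plan is a straightforward unfolding of the definition of $\kappa_{\DS}$ together with the firing condition of a rule, in the spirit of the proof of \cref{lemma:ds_negative_overlap}. First I would recall that, by the definition of the classifier given earlier, the default rule $R_{r+1}$ is triggered on an input $x$ precisely when $x \models \BACK$ (so that $\kappa_{\DS}(x) \neq \lambda$) and no rule $R_i$ with $i \leq r$ fires on $x$. So, by definition of ``triggered'', the default rule is triggered (for some input) iff there exists $x \in \FSpace$ satisfying both conditions simultaneously.

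Next I would translate the non-firing condition into a propositional formula. Since $R_i$ fires on $x$ iff $x \models L_i$, no rule in $\{R_1,\ldots,R_r\}$ fires on $x$ iff $x \not\models L_i$ for every $i \leq r$, which is equivalent to $x \models \neg L_1 \wedge \ldots \wedge \neg L_r$. Conjoining with the validity condition $x \models \BACK$ yields the equivalence: the default rule is triggered iff there exists $x \in \FSpace$ with $x \models \BACK \wedge \neg L_1 \wedge \ldots \wedge \neg L_r$, and this is exactly the assertion that $\BACK \wedge \neg L_1 \wedge \ldots \wedge \neg L_r$ is satisfiable, which proves both directions at once.

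There is essentially no obstacle here beyond being careful with the negation of a cube: each $L_i$ is a conjunction of literals, so $\neg L_i$ is a disjunction of the negations of those literals (a clause), and the whole formula $\BACK \wedge \bigwedge_{i=1}^{r} \neg L_i$ remains in a form directly handable by a SAT solver when $\BACK$ is already in CNF, as assumed in \cref{sec:prelim}. This justifies the claim that the check can be done with a single SAT call, matching the style of the overlap check in \cref{lemma:ds_negative_overlap}.
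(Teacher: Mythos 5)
Your proof is correct and takes the same route as the paper, which only offers a one-line sketch (``no rule fires on a solution to $\BACK \wedge \neg L_1 \wedge \ldots \wedge \neg L_r$''); you simply unfold the definition of $\kappa_{\DS}$ and the firing condition in full, exactly as intended. The added remark that $\neg L_i$ is a clause, keeping the query in CNF for a single SAT call, is a sound and useful elaboration consistent with the paper's framework.
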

\begin{proof}[Sketch]
    No rule fires on a solution to $\BACK \wedge \neg L_1 \ldots \wedge \neg L_r$. 
    \qed
\end{proof}

\subsection{Redundancy}
In order to study rule and literal redundancy, 
we provide a formal definition of decision sets equivalence. 
We denote by $S_\DS(o) = \cup_{R_i \in \rules(o)} \{ x\in \FSpace\ |\ x \models \BACK \wedge L_{i}\}$.

\begin{definition}[Decision Set Equivalence]~\label{def:ds_equiv}
    Let $\DS_1$ and $\DS_2$ be two decision sets defined over the same feature space $\FSpace$ 
    and output $\Outcome$ and having the same default rule. 
        $\DS_1$ is equivalent to $\DS_2$ iff 
        $\forall o \in \Outcome, S_{\DS_1}(o)=S_{\DS_2}(o)$.
        \end{definition}

The following lemma is an immediate consequence of Definition~\ref{def:ds_equiv}. 
\begin{lemma}[Lemma Decision Set Equivalence]~\label{lemma:ds_equiv}
      Let $\DS_1$ and $\DS_2$ be two equivalent decision sets that
       exhibit no negative overlap and let $\BACK$ be a 
       background knowledge. 
       Then $\forall x \models \BACK, \kappa_{\DS_1}(x) = \kappa_{\DS_2}(x)$.
\end{lemma}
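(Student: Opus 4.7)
The plan is to fix an arbitrary $x$ with $x \models \BACK$ and show $\kappa_{\DS_1}(x) = \kappa_{\DS_2}(x)$ by a case split on whether some non-default rule of $\DS_1$ fires on $x$. The key preliminary observation, which I would state up front, is that the ``tie-breaking'' branch of the classifier never activates in the absence of negative overlap: if two non-default rules $R_i, R_j$ both fire on $x$, then $x \in \cover{\FSpace}{L_i}{B} \cap \cover{\FSpace}{L_j}{B}$, so $R_i$ and $R_j$ overlap; by hypothesis this overlap is not negative, so $o_i = o_j$. Hence $\kappa_\DS(x)$ is either the unique outcome shared by all firing non-default rules or the (shared) default outcome $o^*$.

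For the first case, assume some non-default rule of $\DS_1$ fires on $x$; let $c = \kappa_{\DS_1}(x)$ be the common outcome of all such firing rules. Then $x \in S_{\DS_1}(c)$ witnessed by a non-default rule, and by equivalence $x \in S_{\DS_2}(c)$. I would then argue that some non-default rule of $\DS_2$ with outcome $c$ must fire on $x$: the only way $x$ could land in $S_{\DS_2}(c)$ solely through the default rule is if $c = o^*$, but even in that case, by applying the equivalence in the opposite direction to any hypothetical firing non-default rule of $\DS_2$ with outcome $c' \neq c$, we would force a non-default rule of $\DS_1$ with outcome $c'$ to fire on $x$, contradicting no negative overlap in $\DS_1$. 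Thus some non-default rule of $\DS_2$ with outcome $c$ fires, and no non-default rule with a different outcome can fire (same overlap argument inside $\DS_2$), so $\kappa_{\DS_2}(x) = c$.

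For the second case, no non-default rule of $\DS_1$ fires on $x$, so $\kappa_{\DS_1}(x) = o^*$. Suppose for contradiction a non-default rule of $\DS_2$ with outcome $o'$ fires on $x$; then $x \in S_{\DS_2}(o')$, hence by equivalence $x \in S_{\DS_1}(o')$. If $o' \neq o^*$ the witness in $\DS_1$ cannot be the default rule (whose outcome is $o^*$), so some non-default rule of $\DS_1$ with outcome $o'$ fires on $x$, contradicting the case hypothesis. If $o' = o^*$, then by the first case applied inside $\DS_2$ we conclude $\kappa_{\DS_2}(x) = o^* = \kappa_{\DS_1}(x)$. Otherwise no non-default rule of $\DS_2$ fires either, and $\kappa_{\DS_2}(x) = o^* = \kappa_{\DS_1}(x)$.

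The principal obstacle I anticipate is the bookkeeping around the default rule: because $L_{r+1} = \emptyset$ makes the default rule technically ``match'' every input, one has to be careful that memberships in $S_\DS(o^*)$ are not spuriously attributed to the default rule when the semantics actually selects a firing non-default rule. I would manage this by always distinguishing the witness of $x \in S_\DS(o)$ as default or non-default, and invoking equivalence in both directions as needed. Everything else is a routine consequence of Definition~\ref{def:ds_equiv} and the unambiguity afforded by the absence of negative overlap.
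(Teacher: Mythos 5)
Your proof is correct. Note that the paper does not actually supply an argument here: it simply declares the lemma ``an immediate consequence of Definition~\ref{def:ds_equiv}.'' Your case analysis is the natural way to make that claim rigorous, and it correctly isolates the two points where the lemma is \emph{not} quite immediate: (i) absence of negative overlap is what guarantees the tie-breaking branch of $\kappa_{\DS}$ never activates, so the prediction is well-defined as the unique outcome of the firing non-default rules (or the shared default outcome); and (ii) since $L_{r+1}=\emptyset$, the default rule makes $S_{\DS}(o^*)$ contain every valid point, so membership in $S_{\DS}(o^*)$ carries no information and one must, as you do, track whether the witness of $x\in S_{\DS}(o)$ is default or non-default and invoke equivalence in the direction $\DS_2\to\DS_1$ to rule out a conflicting non-default rule firing in $\DS_2$. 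Your handling of the $c=o^*$ subcase (pushing any hypothetical $c'\neq c$ witness back into $\DS_1$ to contradict no-negative-overlap there) is exactly the step that would be missing from a naive ``the cover sets agree, hence the predictions agree'' argument, so your write-up is strictly more careful than the paper's.
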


We introduce the notion of rule redundancy 
to capture the fact that removing a given rule from a decision set
 leads to an equivalent decision set.

\begin{definition}[Rule Redundancy]\label{def:rule_rdd}
    A rule $R_i$ is redundant in $\DS$ iff 
    $\DS \setminus R_i$ is equivalent to $\DS$ 
    \end{definition}
    
    Let $\union{i} = \rules(o_i)\setminus\{R_i\} = 
    \{R_{i_1}, \ldots, R_{i_z}\} $ where $R_{i_m} = (L_{i_m}, o_{i_m})$. 
 
\begin{proposition}[Rule Redundancy Check]\label{prop:rule_rdd}
    A rule $R_i$ is redundant in $\DS$ iff 
    $\BACK \wedge L_i \models L_{i_1} \vee \ldots \vee L_{i_z}$. 
\end{proposition}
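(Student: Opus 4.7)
The plan is to unfold rule redundancy through decision-set equivalence and then translate the resulting set-theoretic equality into the claimed logical entailment.

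For the first step, I would observe that $\DS$ and $\DS\setminus R_i$ share exactly the same rules for every outcome $o\neq o_i$, so $S_\DS(o)=S_{\DS\setminus R_i}(o)$ is immediate in those cases. By Definition~\ref{def:ds_equiv}, equivalence of the two decision sets thus reduces to the single requirement $S_\DS(o_i)=S_{\DS\setminus R_i}(o_i)$. Unfolding the definition of $S_\DS(o_i)$ with $\union{i}=\{R_{i_1},\ldots,R_{i_z}\}$, this set equality becomes
\[\cover{\FSpace}{L_i}{B}\,\cup\,\bigcup_{m=1}^{z}\cover{\FSpace}{L_{i_m}}{B}\;=\;\bigcup_{m=1}^{z}\cover{\FSpace}{L_{i_m}}{B},\]
and since the inclusion $\supseteq$ is trivial, the condition is equivalent to the single inclusion $\cover{\FSpace}{L_i}{B}\subseteq\bigcup_{m=1}^{z}\cover{\FSpace}{L_{i_m}}{B}$.

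For the second step, I would translate this inclusion into logic. By the definition of Cover, the inclusion holds iff for every $x\in\FSpace$ with $x\models\BACK\wedge L_i$ there exists some $m$ with $x\models L_{i_m}$, which is exactly the semantic statement $\BACK\wedge L_i\models L_{i_1}\vee\ldots\vee L_{i_z}$. This closes both directions of the iff in one move, without needing a separate argument for necessity and sufficiency.

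I do not expect a difficult step; the main bit of care is the degenerate case $z=0$, where $R_i$ is the sole rule with output $o_i$. Then the right-hand disjunction is empty (i.e.\ $\bot$), so the entailment reduces to $\BACK\wedge L_i$ being unsatisfiable. Under the running assumption that every non-default rule fires on at least one valid input, $\BACK\wedge L_i$ is satisfiable, and removing $R_i$ strictly shrinks $S_\DS(o_i)$ from a non-empty set to the empty set, so both sides of the equivalence are simultaneously false and the statement still holds. Beyond handling this corner case explicitly, the argument is a direct chain of definitional unfoldings.
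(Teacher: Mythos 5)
Your proof follows essentially the same route as the paper's: reduce redundancy to the single set equality $S_{\DS}(o_i)=S_{\DS\setminus R_i}(o_i)$, unfold the definition of $S_{\DS}(o_i)$, and read off the entailment $\BACK \wedge L_i \models L_{i_1} \vee \ldots \vee L_{i_z}$; the paper simply compresses the last two steps into one line. Your explicit treatment of the degenerate case $z=0$ (using the standing assumption that every non-default rule fires on some valid input) is a welcome bit of extra care that the paper's proof leaves implicit.
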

\begin{proof}
    Let $\DS*=  \DS \setminus R_i$. 
    Clearly $R_i$ is redundant in $\DS$ iff 
    $S_{\DS}(o_i)=S_{\DS^*}(o_i)$. 
    In other words, iff 
    $
    \cup_{R_j \in \rules(o_i)} \{ x\in \FSpace\ |\ x \models \BACK \wedge L_{j}\} = 
    \cup_{R_j \in \rules(o_i)\setminus{R_i} } \{ x\in \FSpace\ |\ x \models \BACK \wedge L_{j}\}
    $. 
    The latter is true iff 
    $\BACK \wedge L_i \models L_{i_1} \vee \ldots \vee L_{i_z}$. 
    \qed
\end{proof}

Following Proposition~\ref{prop:rule_rdd}, 
one can check if a rule is redundant with one SAT oracle 
since 
$\BACK \wedge L_i \models L_{i_1} \vee \ldots \vee L_{i_z}$ iff 
$\BACK \wedge L_i \wedge \neg L_{i_1} \wedge \ldots \wedge \neg L_{i_z}$ is unsatisfiable. 
For instance, in Example~\ref{example:ds},
this allows to show that $R_3$ is redundant. 

One can also build an equivalent decision set with no 
redundant rules by checking and removing redundant rules iteratively. 
Note that the order in which the redundant rules are removed matters
as it might return a different decision set at each execution.

We assume in the rest of this section that no rule is redundant. 
Suppose that $L_i$ contains at least two literals and that $l\in L_i$. 
We denote by $\DS^{i}_l = \DS \cup (L_i \setminus {l} , o_i) \setminus R_i $
the decision set identical to $\DS$ except that
 $l$ is removed from $L_i$. 
We give a formal definition of literal redundancy.

\begin{definition}[Literal Redundancy]\label{def:lit_rdd}
     A literal $l$ is redundant in $L_i$ iff
      $ l \in L_i$ and 
      $\DS^{i}_l $  is equivalent to $\DS$.
    \end{definition}

Informally speaking, a literal is redundant in $L_i$ iff its 
removal from $L_i$ leads to an equivalent decision set. 
In the following we prove that there are only two cases of redundancies 
that we call local and global redundancies, and 
we show sufficient and necessary conditions to find (and remove) them. 
When using $L_i$, we suppose that it contains at least two literals.

We denote by $\REMOVED{i}{l} = L_i \cup \{\neg l \} \setminus \{l\}$. 
We define the following sets to address literal redundancy: 
$\SA= \cup_{R_j \in \union{i}} \{x \in \FSpace\ |\ x \models \BACK \wedge L_j\}$, 
$\SB= \{x \in \FSpace\ |\ x \models \BACK \wedge L_i \setminus \{l\} \}$, 
$\SC= \{x \in \FSpace\ |\ x \models \BACK \wedge L_i\}$, 
$\SD= \{x \in \FSpace\ |\ x \models \BACK \wedge \REMOVED{i}{l}\}$.
By construction, we have: 
\begin{itemize}
    \item $\SB= \SC \cup \SD$ 
    \item $S_\DS(o_i) = \SA \cup \SC$
    \item $S_{\DS^{i}_l}(o_i) = \SA \cup \SB = \SA \cup \SC \cup \SD$
\end{itemize}

\begin{proposition}[Literal Redundancy (1)]
    \label{prop:rdd}
    A literal $l$ is redundant in $L_i$ iff
    $ l \in L_i$ and 
    $\SA \cup \SC = \SA \cup \SB = \SA  \cup \SC \cup \SD$
\end{proposition}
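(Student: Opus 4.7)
The plan is to unfold the definitions and reduce the equivalence of $\DS$ and $\DS^{i}_l$ to a single set equality involving only the class $o_i$, then read off the chain of equalities stated in the proposition.

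First I would observe that the only syntactic difference between $\DS$ and $\DS^{i}_l$ is the replacement of rule $R_i = (L_i, o_i)$ by $(L_i \setminus \{l\}, o_i)$. In particular, for every outcome $o \neq o_i$, the set $\rules(o)$ is identical in both decision sets, so $S_\DS(o) = S_{\DS^{i}_l}(o)$ holds trivially. Hence, by Definition~\ref{def:ds_equiv}, $\DS$ and $\DS^{i}_l$ are equivalent iff $S_\DS(o_i) = S_{\DS^{i}_l}(o_i)$.

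Next I would substitute the bulleted identities listed just before the proposition: $S_\DS(o_i) = \SA \cup \SC$ and $S_{\DS^{i}_l}(o_i) = \SA \cup \SB$. This turns the equivalence condition into $\SA \cup \SC = \SA \cup \SB$. Combining this with the identity $\SB = \SC \cup \SD$ (which holds by construction, since a point satisfies $\BACK \wedge (L_i \setminus \{l\})$ iff it satisfies either $\BACK \wedge L_i$ or $\BACK \wedge \REMOVED{i}{l}$, depending on the value assigned to the literal $l$), one obtains the full chain $\SA \cup \SC = \SA \cup \SB = \SA \cup \SC \cup \SD$. Adding the syntactic premise $l \in L_i$ from the definition of literal redundancy yields the stated biconditional.

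I do not anticipate a serious obstacle here: the statement is essentially a rewriting of Definition~\ref{def:lit_rdd} using the fact that only one rule changes between $\DS$ and $\DS^{i}_l$, and using the constructive decomposition $\SB = \SC \cup \SD$. The only care required is to justify that the outputs other than $o_i$ do not contribute any new constraint to equivalence, which follows directly from the fact that $\rules(o)$ for $o \neq o_i$ is untouched by the modification of $R_i$.
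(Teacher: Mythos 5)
Your proposal is correct and follows essentially the same route as the paper's proof: reduce equivalence of $\DS$ and $\DS^{i}_l$ to the single equality $S_\DS(o_i)=S_{\DS^{i}_l}(o_i)$ (since the other classes are untouched), then substitute the identities $S_\DS(o_i)=\SA\cup\SC$, $S_{\DS^{i}_l}(o_i)=\SA\cup\SB$ and $\SB=\SC\cup\SD$ that the paper lists just before the statement. You merely spell out the steps that the paper's two-line proof leaves implicit.
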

\begin{proof}
    Observe first that $\DS^{i}_l $ is equivalent to $\DS$ 
    iff  $S_\DS(o_i)= S_{\DS^{i}_l}(o_i)$. 
    Therefore, 
    $l$ is redundant in $L_i$  iff 
    $\SA \cup \SC  = \SA \cup \SB = \SA  \cup \SC \cup \SD$.
    \qed
\end{proof}

\begin{lemma}[Local Redundancy]\label{lemma:rddl}
    If $l \in L_i$ and 
    $ \BACK \wedge L_i\setminus\{ l\}  \models l$ then $l$ is redundant in $L_i$.  
This is called local redundancy. 
\end{lemma}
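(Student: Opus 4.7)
The plan is to leverage Proposition~\ref{prop:rdd}: to prove $l$ is redundant in $L_i$, it suffices to show $\SA \cup \SC = \SA \cup \SB = \SA \cup \SC \cup \SD$. Since by construction $\SB = \SC \cup \SD$, the chain of equalities reduces to a single task, namely proving that $\SD \subseteq \SA \cup \SC$. In fact, I expect to show something stronger and cleaner: $\SD = \emptyset$.

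First, I would unfold the definition $\REMOVED{i}{l} = (L_i \setminus \{l\}) \cup \{\neg l\}$, so that
\[
\SD = \{x \in \FSpace \mid x \models \BACK \wedge (L_i \setminus \{l\}) \wedge \neg l\}.
\]
Next, I would translate the hypothesis $\BACK \wedge L_i\setminus\{l\} \models l$ into its semantic form: every $x \in \FSpace$ that satisfies $\BACK \wedge (L_i \setminus \{l\})$ must also satisfy $l$. Consequently, no such $x$ can satisfy $\neg l$ simultaneously, which directly yields $\SD = \emptyset$.

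From $\SD = \emptyset$, we immediately get $\SB = \SC \cup \SD = \SC$, and therefore
\[
\SA \cup \SC \;=\; \SA \cup \SB \;=\; \SA \cup \SC \cup \SD.
\]
Together with the assumption $l \in L_i$, Proposition~\ref{prop:rdd} concludes that $l$ is redundant in $L_i$.

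The main obstacle is essentially bookkeeping: one must be careful about the treatment of $\BACK$ in the entailment hypothesis (it is a semantic entailment over $\FSpace$, not a syntactic one) so that the step ``$x \models \BACK \wedge (L_i \setminus \{l\})$ forces $x \models l$'' is justified exactly as stated. Beyond that, the argument is a direct application of the set-theoretic characterization given in Proposition~\ref{prop:rdd}.
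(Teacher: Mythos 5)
Your proposal is correct and follows essentially the same route as the paper: both reduce the claim to Proposition~\ref{prop:rdd} and show that the hypothesis forces $\SB = \SC$ (you do this via $\SD = \emptyset$ and the identity $\SB = \SC \cup \SD$, which is the same observation stated slightly more explicitly). Your version is, if anything, a little more detailed in justifying why the entailment kills $\SD$, but there is no substantive difference in approach.
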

\begin{proof}
If $ \BACK \wedge L_i\setminus\{ l\}  \models l$ 
then 
$\SC = \SB$ and 
thus 
$S_{\DS^{i}_l}(o_i) = \SA \cup \SB = \SA \cup \SC = S_{\DS}(o_i)$. 
Therefore, by Proposition~\ref{prop:rdd}, $l$ is redundant in $L_i$. \qed
\end{proof}

{In Example~\ref{example:ds}, $(age>10)$ is locally redundant in $R_1$.}

Recall that $\union{i}= \rules(o_i)\setminus\{R_i\} = \{R_{i_1}, \ldots, R_{i_z}\}$
and $\REMOVED{i}{l} = L_i \cup \{\neg l \} \setminus \{l\}$. 

\begin{lemma}[Global Redundancy]
    \label{lemma:rddg}
    If $l$ is not locally redundant in $L_i$ 
    and 
    $ \BACK \wedge \REMOVED{i}{l} \models L_{i_1} \vee \ldots, \vee L_{i_z} $, 
    then $l$ is redundant in $L_i$.  
    This is called global redundancy.
\end{lemma}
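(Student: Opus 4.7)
The plan is to unpack the semantic meaning of the entailment hypothesis in terms of the four sets $\SA, \SB, \SC, \SD$ defined just before Proposition~\ref{prop:rdd}, and then invoke Proposition~\ref{prop:rdd} directly. The only real content is translating $\BACK \wedge \REMOVED{i}{l} \models L_{i_1} \vee \ldots \vee L_{i_z}$ into a set-inclusion statement about $\SD$ and $\SA$.

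First, I would observe that, by the definition of $\SD$, the hypothesis $\BACK \wedge \REMOVED{i}{l} \models L_{i_1} \vee \ldots \vee L_{i_z}$ is equivalent to saying that every $x \in \SD$ satisfies at least one $L_{i_m}$ (together with $\BACK$), which is exactly $\SD \subseteq \SA$. Second, using the identity $\SB = \SC \cup \SD$ stated before Proposition~\ref{prop:rdd}, this gives
\[
\SA \cup \SB \;=\; \SA \cup \SC \cup \SD \;=\; \SA \cup \SC,
\]
since $\SD$ is absorbed into $\SA$. Third, by the bullet-point identities, $S_{\DS^{i}_l}(o_i) = \SA \cup \SB$ and $S_{\DS}(o_i) = \SA \cup \SC$, so the previous equality is exactly the condition $\SA \cup \SC = \SA \cup \SB = \SA \cup \SC \cup \SD$ required by Proposition~\ref{prop:rdd}. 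Hence $l$ is redundant in $L_i$.

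I do not anticipate any real obstacle: the argument is purely a one-line set-theoretic consequence of the entailment, chained to Proposition~\ref{prop:rdd}. The only subtlety worth flagging is that the hypothesis that $l$ is \emph{not} locally redundant is not used in this implication; it only serves to separate this case from Lemma~\ref{lemma:rddl} conceptually, so that local and global redundancy together cover the two disjoint mechanisms by which $l$ can be redundant. I would mention this at the end of the proof to make clear why the non-local assumption appears in the statement.
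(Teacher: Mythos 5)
Your proof is correct and follows essentially the same route as the paper's: translate the entailment into $\SD\subseteq\SA$, use $\SB=\SC\cup\SD$ to collapse $\SA\cup\SB$ to $\SA\cup\SC$, and conclude via Proposition~\ref{prop:rdd}. Your closing remark that the non-local-redundancy hypothesis is not actually used in the implication is accurate and a worthwhile clarification the paper leaves implicit.
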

\begin{proof}
    If 
     $ \BACK \wedge \REMOVED{i}{l} \models L_{i_1} \vee \ldots, \vee L_{i_z} $
    then 
   $\SD \subseteq \SA$. 
   Thus, since $\SB =\SC \cup \SD$, we have 
    $S_{\DS^{i}_l}(o_i) = \SA \cup \SB = \SA \cup \SC \cup \SD = \SA \cup \SC = S_{\DS}(o_i)$. 
    Therefore, by Proposition~\ref{prop:rdd}, 
    $l$ is redundant in $L_i$.  \qed
    \end{proof}

    {In Example~\ref{example:ds}, $ (size \neq 140 )$ is globally redundant in $R_1$.}

\begin{theorem}[Literal Redundancy (2)]~\label{theorem:rdd}
        A literal $l\in L_i$ is redundant 
        iff it is locally redundant or globally redundant. 
    \end{theorem}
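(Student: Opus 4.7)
The plan is to prove the biconditional in two directions. The $(\Leftarrow)$ direction is essentially immediate: it is exactly the content of Lemma~\ref{lemma:rddl} (local case) combined with Lemma~\ref{lemma:rddg} (global case). So the real work is in the $(\Rightarrow)$ direction, and my strategy there is to start from the characterization in Proposition~\ref{prop:rdd} and exploit the fact that the sets $\SC$ and $\SD$ are always disjoint.

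More concretely, suppose $l \in L_i$ is redundant. By Proposition~\ref{prop:rdd}, this is equivalent to $\SA \cup \SC = \SA \cup \SC \cup \SD$, which in turn is equivalent to the inclusion $\SD \subseteq \SA \cup \SC$. The key observation is that every $x \in \SC$ satisfies $l$ (since $l \in L_i$ and $x \models L_i$), whereas every $x \in \SD$ satisfies $\neg l$ (since $\neg l \in \REMOVED{i}{l}$ and $x \models \REMOVED{i}{l}$). Hence $\SC \cap \SD = \emptyset$, and combining this with $\SD \subseteq \SA \cup \SC$ yields $\SD \subseteq \SA$.

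Now I would do a case split. If $l$ is locally redundant, we are immediately done. Otherwise, $l$ is not locally redundant, and the inclusion $\SD \subseteq \SA$ translates back into the entailment $\BACK \wedge \REMOVED{i}{l} \models L_{i_1} \vee \ldots \vee L_{i_z}$, which together with the hypothesis that $l$ is not locally redundant is exactly the definition of global redundancy from Lemma~\ref{lemma:rddg}. So $l$ is globally redundant, completing the proof.

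The main obstacle is essentially conceptual rather than technical: one must recognize that the apparent ``two separate cases'' (local vs.\ global) arise naturally from the disjointness $\SC \cap \SD = \emptyset$, which forces $\SD \subseteq \SA$ as soon as any form of redundancy holds. Once this disjointness is in hand, no further case analysis or satisfiability reasoning is needed, and the theorem follows by translating between the set-theoretic language of Proposition~\ref{prop:rdd} and the logical-entailment language of Lemmas~\ref{lemma:rddl} and~\ref{lemma:rddg}.
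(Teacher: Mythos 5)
Your proof is correct and follows essentially the same route as the paper: both start from Proposition~\ref{prop:rdd}, observe that $\SC\cap\SD=\emptyset$ because $L_i$ contains $l$ while $\REMOVED{i}{l}$ contains $\neg l$, and then read off local redundancy ($\SD=\emptyset$) or global redundancy ($\SD\subseteq\SA$) from the resulting inclusion. Your phrasing of the case split (on whether $l$ is locally redundant rather than on whether $\SD$ is empty) is equivalent and, if anything, slightly cleaner since $\SD\subseteq\SA$ is derived uniformly before splitting.
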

    \begin{proof}
        $\Longrightarrow$: 
        If $l$ is redundant, then by Proposition~\ref{prop:rdd} we have 
        $\SA \cup \SC = \SA  \cup \SC \cup \SD$. 
        Observe that $\SD \cap \SC = \emptyset$. 
        This is because 
        if $x \in \SD \cap \SC$, 
        then $x\models B \wedge L_i \wedge \REMOVED{i}{l}$ 
        which is false because
         $L_i \wedge \REMOVED{i}{l}$ contains $l$ and $\neg l$. 
        Therefore, there are only two cases
         for  $\SA \cup \SC = \SA  \cup \SC \cup \SD$ to hold.
          Either $\SD = \emptyset$ or $\SD \neq \emptyset $ and $\SD \in \SA$. 
          The first case is true iff $\BACK \wedge L_i\setminus\{ l\}  \models l$, that is, 
           $l$ is locally redundant. 
           The second case is true iff 
           $ \BACK \wedge \REMOVED{i}{l} \models L_{i_1} \vee \ldots, \vee L_{i_z} $, 
           that is, $l$ is globally redundant
           $\Longleftarrow$: trivial. 
        \qed
    \end{proof}

    \begin{corollary}[Assessing Literal Redundancy]\label{corollary:full}
        A literal $l \in L_i$ is redundant iff 
        one of the following conditions holds:
        \begin{enumerate}
            \item \textbf{Local redundancy}: 
            \[
                \BACK \wedge (L_i \setminus \{l\}) \wedge \neg l \text{ is unsatisfiable.}
            \]
            \item \textbf{Global redundancy}: (1) does not hold, and 
            \[
                \BACK \wedge  \REMOVED{i}{l} \wedge \neg L_{i_1} \wedge \dots \wedge \neg L_{i_z} \text{ is unsatisfiable.}
            \]
        \end{enumerate}
    \end{corollary}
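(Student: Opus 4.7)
The plan is to derive the corollary as an immediate reformulation of Theorem~\ref{theorem:rdd}, using only the standard SAT duality recalled in the Background: for any two propositional formulas $F_1, F_2$, we have $F_1 \models F_2$ iff $F_1 \wedge \neg F_2$ is unsatisfiable. Nothing new needs to be proven about decision sets; the work is purely syntactic rewriting of the entailments given in Lemmas~\ref{lemma:rddl} and~\ref{lemma:rddg} into unsatisfiability queries.

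First, I would invoke Theorem~\ref{theorem:rdd} to split the ``iff'' into the disjunction of local and global redundancy, and handle each case separately. For condition~(1), local redundancy is characterized by $\BACK \wedge (L_i \setminus \{l\}) \models l$ (Lemma~\ref{lemma:rddl}). Applying the duality with $F_1 := \BACK \wedge (L_i \setminus \{l\})$ and $F_2 := l$ gives exactly the unsatisfiability of $\BACK \wedge (L_i \setminus \{l\}) \wedge \neg l$, which is condition~(1).

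For condition~(2), global redundancy as stated in Lemma~\ref{lemma:rddg} has two parts: $l$ is not locally redundant, and $\BACK \wedge \REMOVED{i}{l} \models L_{i_1} \vee \ldots \vee L_{i_z}$. The ``not locally redundant'' part translates directly to ``(1) does not hold.'' For the entailment part, I would apply the duality again with $F_1 := \BACK \wedge \REMOVED{i}{l}$ and $F_2 := L_{i_1} \vee \ldots \vee L_{i_z}$, obtaining that the entailment holds iff $\BACK \wedge \REMOVED{i}{l} \wedge \neg (L_{i_1} \vee \ldots \vee L_{i_z})$ is unsatisfiable. A single De~Morgan step rewrites the negated disjunction as $\neg L_{i_1} \wedge \ldots \wedge \neg L_{i_z}$, yielding precisely condition~(2).

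There is no real obstacle here; the only small care-point is the De~Morgan rewriting on the right-hand side of the global entailment, since each $L_{i_j}$ is a cube and hence each $\neg L_{i_j}$ becomes a clause, so the resulting formula is still a well-formed propositional query that can be discharged by a single SAT oracle call. Combining the two cases via Theorem~\ref{theorem:rdd} completes the argument.
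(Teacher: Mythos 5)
Your proposal is correct and matches the paper's argument: the paper's proof simply states that the corollary is immediate from Theorem~\ref{theorem:rdd} together with Lemmas~\ref{lemma:rddl} and~\ref{lemma:rddg}, and your write-up supplies exactly the intended details, namely rewriting each entailment as an unsatisfiability check via $F_1 \models F_2$ iff $F_1 \wedge \neg F_2$ unsatisfiable, plus the De~Morgan step on the disjunction of cubes.
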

\begin{proof}
    Immediate from Theorem~\ref{theorem:rdd}  and Lemmas~\ref{lemma:rddl} and~\ref{lemma:rddg}.
    \qed
\end{proof}

Corollary~\ref{corollary:full} can be used to iteratively 
remove redundant literals, thus building decision sets
 with no rules/literal redundancies. 
 It should be noted that different removal orders might lead to 
 different decision sets.

\begin{example}
    Suppose that $\BACK = (b \vee w) \wedge (\neg d \vee f) $ and  $\DS=\{R_1, R_2, R_3\}$
    where  $R_1 : (L_1 = a \land b, o_1)$, $R_2 : (L_2 = a \land w, o_1)$, and 
     $R_3 : (L_3 = c \land d \wedge f, o_2)$. 
     \begin{itemize}
        \item $ \BACK \wedge L_3\setminus\{ f\}  \models f$. Therefore, $f$ is locally redundant in $L_3$.
        \item  $\REMOVED{1}{b} = a \land \neg b$, and 
        $\BACK \wedge \REMOVED{1}{b} =  (b \lor w) \land a \land \neg b
         \equiv a \land \neg b \land w$.
        Thus $\BACK \wedge \REMOVED{1}{b} \models R_2$ and therefore $b$ is globally redundant in $L_1$. 
     \end{itemize}
\end{example}

\subsubsection{Relation to Abductive Explanations.}

\begin{proposition}\label{prop:wax-rdd}
 {Suppose that $L_k \subseteq \{ x_i=v_i^j ~|~ i \in [1,m], v_i^j \in \dom{i} \}$.}
 If $R_k=(L_k,o_k)$ fires on $x$, and there is no negative overlap involving $R_k$,
  then the features used in $L_k$ represent a WAXp.
\end{proposition}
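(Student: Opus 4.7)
The plan is to unpack the WAXp definition at the candidate explanation $\fml{X}=\{i : (x_i=v_i^j)\in L_k\text{ for some }j\}$ and show that fixing the coordinates in $\fml{X}$ to the values dictated by $x$ forces $\kappa_{\DS}$ to return $o_k$ on every valid completion.

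First I would establish the base case: since $R_k$ fires on $x$, we have $x\models L_k$, and since no negative overlap involves $R_k$, no rule $R_j$ with $o_j\neq o_k$ can fire on $x$ (otherwise $\cover{\FSpace}{L_k}{\BACK}\cap\cover{\FSpace}{L_j}{\BACK}$ would contain $x$, contradicting the no-negative-overlap hypothesis via \Cref{lemma:ds_negative_overlap}). Thus every rule firing on $x$ predicts $o_k$, so by the definition of $\kappa_{\DS}$ we get $\kappa_{\DS}(x)=o_k=:c$.

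Next, take an arbitrary $x'\in\FSpace$ with $\fml{C}_{\BACK}(x')$ holding and $x'_i=v_i$ for every $i\in\fml{X}$. The shape hypothesis $L_k\subseteq\{x_i=v_i^j\}$ is crucial here: every literal of $L_k$ is of the form $x_i=v_i^j$, and because $x\models L_k$ we must have $v_i=v_i^j$ for exactly those literals. Hence the constraints $x'_i=v_i$ for $i\in\fml{X}$ directly imply $x'\models L_k$, so $R_k$ fires on $x'$. Since $x'\models\BACK$, applying \Cref{lemma:ds_negative_overlap} once more, any other rule $R_j$ firing on $x'$ satisfies $\cover{\FSpace}{L_k}{\BACK}\cap\cover{\FSpace}{L_j}{\BACK}\neq\emptyset$, i.e. $R_k$ and $R_j$ overlap; by the no-negative-overlap assumption, $o_j=o_k$. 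Therefore only the prediction $o_k$ is possible, tie-breaking is not invoked, and $\kappa_{\DS}(x')=o_k=c$. This verifies \eqref{eq:waxp} for $\fml{X}$.

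The only delicate point I expect is wording the step that translates ``fixing features in $\fml{X}$ to the values of $x$'' into ``$x'\models L_k$''. This is where the hypothesis that every literal of $L_k$ is an equality on a single feature is indispensable; without it one could have literals such as $x_i>v_i$, and fixing $x'_i=v_i$ would not suffice to re-satisfy $L_k$. Everything else follows from a direct reading of the definitions of overlap and of $\kappa_{\DS}$.
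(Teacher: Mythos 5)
Your proof is correct and is precisely the argument the paper intends: the paper's own proof is just the one-line ``By construction,'' and your write-up supplies the construction --- fixing the features of $L_k$ to the values of $x$ re-fires $R_k$ on every valid completion (this is where the equality-literal hypothesis is used, as you note), and the absence of negative overlap involving $R_k$ forces every co-firing rule to predict $o_k$, so \eqref{eq:waxp} holds. No gap; the only implicit assumption you share with the paper is that the instance point $x$ itself satisfies $\BACK$.
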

\begin{proof}
By construction.  \qed
\end{proof}

\begin{proposition}\label{prop:ax-rdd}
     {Suppose that $L_k \subseteq \{ x_i=v_i^j ~|~ i \in [1,m], v_i^j \in \dom{i} \}$.}
  If $R_k=(L_k,o_k)$ fires on $v$, there is no negative overlap, and
  $L_k$ contains no (global or local) redundant literal, then the
  features from $L_k$ represent a AXp.
  \qed
\end{proposition}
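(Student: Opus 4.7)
My plan would be to decompose the claim into the two defining properties of an abductive explanation: (i) the set of features in $L_k$ is a weak abductive explanation (WAXp), and (ii) it is subset-minimal among WAXps.

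For (i), I would simply invoke Proposition~\ref{prop:wax-rdd}: the hypotheses that $R_k$ fires on $v$ and that there is no negative overlap already give that the features occurring in $L_k$ form a WAXp of $(v,o_k)$. Nothing new is needed here.

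For (ii), I would fix an arbitrary feature $i\in\fml{X}$, let $l\in L_k$ be the (unique) literal of the form $x_i=v_i$ in $L_k$, and let $\fml{X}_{-i}=\fml{X}\setminus\{i\}$. I must exhibit a valid point $x$ that agrees with $v$ on every feature of $\fml{X}_{-i}$ and such that $\kappa_{\DS}(x)\neq o_k$; this will witness that $\fml{X}_{-i}$ is not a WAXp. Since $l$ is not redundant in $L_k$, Theorem~\ref{theorem:rdd} (via Corollary~\ref{corollary:full}) says $l$ is neither locally nor globally redundant, so \emph{both}
\[
\BACK \wedge (L_k\setminus\{l\}) \wedge \neg l \qquad\text{and}\qquad \BACK\wedge \REMOVED{k}{l}\wedge\neg L_{k_1}\wedge\dots\wedge\neg L_{k_z}
\]
are satisfiable. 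The second is strictly stronger than the first, so I would take $x$ to be any model of the second formula. By construction, $x\models\BACK$, $x\models L_k\setminus\{l\}$ (hence $x$ agrees with $v$ on $\fml{X}_{-i}$), $x\models\neg l$ (so $R_k$ does not fire on $x$), and no other rule $R_{k_j}\in\rules(o_k)$ fires on $x$.

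The last step is to conclude that $\kappa_{\DS}(x)\neq o_k$. Since no rule in $\rules(o_k)$ fires on $x$, the prediction $o_k$ can only arise from the default rule, i.e.\ only if no rule fires at all and $o_{r+1}=o_k$. Absent that corner case, any rule that does fire on $x$ carries an outcome different from $o_k$, and the no-negative-overlap assumption precludes a simultaneous firing of an $o_k$-rule and a non-$o_k$-rule anyway, so tie-breaking can only return a value distinct from $o_k$. Consequently $\kappa_{\DS}(x)\neq o_k$, which refutes the WAXp status of $\fml{X}_{-i}$. Since $i$ was arbitrary, $\fml{X}$ is subset-minimal, and hence an AXp.

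The main obstacle I anticipate is precisely the default-rule corner case: if the default outcome equals $o_k$, the witness extracted from non-global-redundancy might satisfy $\kappa_{\DS}(x)=o_k$ by firing no rule at all. Handling this cleanly either requires a side assumption (as seems implicit in the setup, where the default is a fallback distinct from an actively predicted class), or a small refinement of the argument, e.g.\ noting that a constant-default equivalence on $o_k$ would itself collapse to a redundancy statement about $R_k$ already precluded by our hypotheses. I would flag this in the write-up rather than bury it.
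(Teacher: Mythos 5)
Your argument is correct, and it pivots on the same core equivalence as the paper's proof---failure of subset-minimality of the WAXp corresponds exactly to redundancy of some literal of $L_k$---but you run it in the opposite, constructive direction. The paper argues by contradiction at the level of \cref{def:lit_rdd}: if some feature could be dropped while preserving the prediction, then $\DS^{k}_l$ would be equivalent to $\DS$ and $l$ would be redundant. You instead unpack non-redundancy through \cref{theorem:rdd} and \cref{corollary:full} to obtain an explicit model $x$ of $\BACK\wedge\REMOVED{k}{l}\wedge\neg L_{k_1}\wedge\dots\wedge\neg L_{k_z}$ and use it as a concrete counterexample to the WAXp property of $\fml{X}\setminus\{i\}$; the no-negative-overlap hypothesis then forces any firing rule to predict something other than $o_k$. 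This extra explicitness pays off: the default-rule corner case you flag is genuine, and the paper's own proof contains exactly the same gap, hidden in the step ``$\DS^{k}_l$ is equivalent to $\DS$''. If the witness $x$ fires no rule at all and $o_{r+1}=o_k$, then $\kappa_{\DS}(x)=o_k$ even though $x$ may lie outside $S_{\DS}(o_k)$, so preservation of the prediction does not immediately yield equivalence of the $S$-sets, and, conversely, non-redundancy does not immediately yield $\kappa_{\DS}(x)\neq o_k$. The statement survives only under a convention such as including the default rule in $\rules(o_{r+1})$ (which makes every literal of an $o_{r+1}$-rule globally redundant, so the hypothesis of no redundant literal silently excludes $o_k=o_{r+1}$) or treating the default outcome as a fallback value distinct from actively predicted classes. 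Flagging this rather than burying it is the right call; the paper does not.
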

\begin{proof}
Suppose by contradiction that the features from $L_k$ do not define an AXp. 
Then there is a literal $l \in L_k$ such that 
$\forall x \in \FSpace$ such that $x \models \BACK$,
if $x\models L_k\setminus \{l\}$, 
then $\kappa_{\DS(x)} = o_k$. 
In this case, $\DS^{k}_l $ (i.e., the decision set identical to $\DS$
 except for $L_k$ which is replaced with $L_k\setminus\{l\}$) is equivalent to $\DS$.
 Then, by Definition~\ref{def:lit_rdd}, $l$ is redundant, 
 thus the contradiction. 
  \qed
\end{proof}

Observe that, if the conditions of~\cref{prop:ax-rdd} hold, then
the literals in $L_k$ represent an AXp, and so can be identified
manually by a human decision-maker. Otherwise, as proved in earlier
work for the concrete case of decision lists~\cite{MarquesSilvaI23},
finding an AXp is computationally hard.



\section{Experiments}~\label{sec:res}

We evaluate the different desired properties on different use cases 
including decision sets, decision trees, and anchor explanations. 
All SAT calls are performed using the PySAT toolkit%
\footnote{%
\url{https://pysathq.github.io/}.}
with its default configuration~\cite{itk-sat24,imms-sat18}. 
All experiments run on AppleM1 Pro that has 32G memory and 
8 cores.

\paragraph{Prediction Models \& Datasets.}

In order to make our evaluation as broad and as unbiased as possible, we selected 
datasets from the UCI machine learning 
repository~\footnote{\url{https://archive.ics.uci.edu}} 
with the parameters: ${\cal{P}} = {(Task, Min, Max, Nb, Types)}$ 
on each use case (whenever relevant) where: 
\begin{itemize}[nosep,topsep=1.5pt]
    \item $Task \subseteq \{classification, regression\}$ is the prediction task 
    \item $Min$ (respectively $Max$) is the minimum (respectively $maximum$) size of the dataset. 
    \item $Nb$ is the minimum number of inputs of each class present in the dataset in case of classification.
    \item $Types \subseteq \{numerical,~binaly\}$ is the type of features.
\end{itemize}

We describe the different 
prediction models along with their tailored setting. 
\begin{itemize}[nosep,topsep=1.5pt]
    \item \textbf{Orange (v3)}\footnote{\url{https://orangedatamining.com}}: 
    a library to learn decision sets for 
    classification. 
    The datasets are selected using the parameters 

     ${\cal{P}} = {(\{classification\}, 100, 10^6, 20, \{binary,  numerical\})}$. 

     \item \textbf{Boomer~\cite{fuernkranz-ecml20}\footnote{\url{https://github.com/mrapp-ke/MLRL-Boomer}}}: 
     A library for learning gradient boosted multi-label classification rules. 
     We use the default 
     Boomer datasets\footnote{\url{https://github.com/mrapp-ke/Boomer-Datasets}}.  
    
    \item \textbf{scikit-learn (v1.6.1)}\footnote{\url{https://scikit-learn.org/stable/}}
    and \textbf{Interpretable AI (IAI)}\footnote{\url{https://www.interpretable.ai/}}
 to learn decision trees (DTs) for 
 {classification and regression}. 
scikit-learn learns trees in a greedy way 
with no guarantee of optimality whereas 
IAI learns optimal decision trees. 
The parameters used for the datasets are

${\cal{P}} = {(\{classification, regression\}, 100, 4* 10^6, 20, \{binary,  numerical\})}$. 

\end{itemize}

\paragraph{Background Knowledge.}
In our empirical study, 
the Boolean variables that are used in the different decision sets 
represent a domain relation of the form $(x_f \bowtie v_f)$ 
where $\bowtie\,\in\{=,>,\ge,\le,<\}$ 
for some $f \in \FSpace$ and $v_f \in \dom{f}$. 
We implemented a general purpose procedure to generate
a background knowledge $\BACK$ for each use case
 that maintains domain coherence. 
For instance, 
if $(length > 30)$ and $(length = 17)$ appear in a 
decision set, then $\BACK$ contains the clause
 $\neg(length = 17) \vee \neg(length > 30)$. 


\jnoteF{I believe you should use parenthesis to indicate that the
  complement applies to the literal}
\mnoteF{Done}

Given a set of rules, 
for each feature $f$, we first compute 
 a list, called $Val_f$, that contains all distinct values 
from the domain of $f$ that are used in the decision set 
(or Anchor explanations). 
 $Val_f$ is increasingly ordered if the values are numerical. 
 We also collect
the set of unary relations used for $f$, denoted by $Relations_f$,
which can be any subset of 
$\{=,>,\ge,\le,<\}$.  
The background knowledge $\BACK$ is then constructed using
 Algorithm~\ref{algo:bc} as a CNF.
   Note that we need only the three operators $>, \geq$, and $=$, since 
   $(x > v)$ is equivalent to $(x \geq v-1)$ and 
   $(x < v)$ is equivalent to $(x \leq v-1)$.
   Algorithm~\ref{algo:bc} follows 
   standard procedures for encoding finite domains into
  SAT~\cite{09-lcg}.

\begin{algorithm}[t]
   \caption{{Domain Constraints As a Background Knowledge~\label{algo:bc}}}
   \begin{algorithmic}[1]
       \STATE \textbf{Function: } Build $\BACK$
       \STATE \textbf{Input:} $Relations_1,\ldots, Relations_m, Val_1, \ldots, Val_m  $
       \STATE \textbf{Output:} 
         A background knowlegde $\BACK$ as a CNF \\

       \STATE $\BACK = \emptyset$
     
       \FOR{$f \in \{1,\ldots, m\}$} 
           \IF{ $`='~\in Relations_f$ }
                \STATE $\BACK \gets \BACK \cup  \{ (f=Val_f[i]) \implies \neg (f=Val_f[j]) ~|~ i<j \in [1, |Val_f|]\} $ 
            \ENDIF

            \IF{ $`>'~\in Relations_f$ }
                \STATE $\BACK \gets \BACK \cup 
                 \{ (f>Val_f[i+1]) \implies (f>Val_f[i]) ~|~ i \in [1, |Val_f| -1 ]   \} $ 
            \ENDIF

        \IF{  $ `\geq'~\in Relations_f$} 
           \STATE $\BACK \gets \BACK \cup 
             \{ (f \geq Val_f[i+1]) \implies (f \geq Val_f[i])  ~|~  i \in [1, |Val_f| -1]: \} $
        \ENDIF

        \IF{  $ \{`=' , `\geq' \}~\subseteq Relations_f$} 
           \STATE $\BACK \gets \BACK \cup 
             \{ (x=Val_f[i]) \implies (x \geq Val_f[i]) ~|~   i \in [1, |Val_f|] \}$
             \STATE $\BACK \gets \BACK \cup 
             \{  (x=Val_f[i]) \implies \neg (x \geq Val_f[i+1])  ~|~  i \in [1, |Val_f| -1 ] \}$
        \ENDIF

        \IF{$\{`=' , `>' \}~\subseteq Relations_f$} 
           \STATE $\BACK \gets \BACK \cup 
             \{ (x=Val_f[i]) \implies \neg (x > Val_f[i])  ~|~  i \in [1, |Val_f|] \}$

           \STATE $\BACK \gets \BACK \cup 
             \{  (x=Val_f[i+1]) \implies (x > Val_f[i])   ~|~  i \in [1, |Val_f| -1 ] \}$
        \ENDIF

        \IF{  $\{`\geq' , `>' \}~\subseteq Relations_f$} 
           \STATE $\BACK \gets \BACK \cup 
             \{  (x>Val_f[i]) \implies (x \geq Val_f[i]) ~|~   i \in [1, |Val_f|] \}$
        \ENDIF
        
        \IF{  $\{`=' ,`\geq' , `>'  \}~\subseteq Relations_f$} 
           \STATE $\BACK \gets \BACK \cup 
             \{ (x\geq Val_f[i]) \implies (x =Val_f[i]) \vee (x>Val_f[i])  ~|~  i \in [1, |Val_f|] \}$
        \ENDIF

        \ENDFOR

       \STATE \textbf{Return} $\BACK$
   \end{algorithmic}
   \end{algorithm}


\paragraph{Learning Setting.} 
For Orange, sickit-learn, and IAI, 
a grid search is used to select the best values for the 
maximum rule length, the minimum covered examples per rule, 
among others. 
Each dataset used with Orange, sickit-learn, and IAI 
is split into $80\%$ for training and $20\%$ for testing. 
Boomer's learning parameters are the default ones except 
for the maximum number of rules that we fix 
to $100$
 with one label classification.
 The detailed grid search parameters are given 
 in Table~\ref{tab:grid}.
Cross validation is performed
with $5$ folds for all experiments
using stratified sampling and 
each execution is randomly repeated $4$ times.

\begin{table}[t]
    \centering
    \caption{Grid Search Parameters\label{tab:grid}}
    \resizebox{\textwidth}{!}
    {
    
\begin{tabular}{||c||c|c|c|c|c||} 
\hline
\hline
                & Orange &  Sklearn Class. & Sklearn Reg. & IAI Class. & IAI Reg. \\ \hline\hline
 Beam Width     & 10,30  & -    &    - & -    &    -   \\ \hline
 Min Covered    & 5,15   & -    &    - & -    &     -  \\ \hline
Max Rule Length  & 3,5  & -    &    - &    - &  -   \\ \hline
Criterion       & -  & gini, entropy   &  sqr err, fried mse     &  -  &  mse  \\ \hline
Max Depth       & -   & 3,5,7,9    & 3,5,7,9,11   &    3,5,7 &  3,5,7  \\ \hline
Min Sample Leaf & -   &  5,15,25   & 5,15,25    &    - &  -   \\ \hline
Min Bucket      & -   & -        - & -    &     5,15   &  5,15  \\ \hline  \hline
\end{tabular}

    }
\end{table}

\paragraph{Experimental Pipeline.}

All decision sets that have only one output are discarded. 
For each decision set, we first remove
duplicate rules and rules that never fire. 
After this preprocessing, we run Algorithm~\ref{algo:overlap}
 to find all overlap.
Next, we look for redundant rules then remove them. 
Finally, we compute 
all locally/globally redundant literals. 
We use a timeout of one hour on each decision set 
to find all pairs negative overlap
and rule/literal redundancies.

 \paragraph{Decision Set Statistics.} 
\begin{itemize}[nosep,topsep=1.5pt]
    \item Train: Training accuracy (classification) or Training MSE (regression)
    \item Test: Testing accuracy (classification) or Training MSE (regression)
    \item NR: Number of rules
    \item NP: Cardinality of $\{o_i ~|~ \DS= \cup_i (L_i,o_i)\}$ 
    \item TO: CPU time (s) to find all negative overlap
    \item TB: CPU time (s) to generate the background knowledge
    \item TC: CPU time (s) to find all redundant rules
    \item TR: CPU time (s) to find all redundant literals 
    \item BS: Size of the background knowledge
    \item RS: Sum of the sizes of the rules 
    \item RM: Maximum rule size
    \item NO: Number of negative overlap
    \item PO $=\frac{\text{NO}}{\text{Total}}$: Percentage of
     negative overlap where 
    Total is the total number of pairs of rules associated to different predictions 
\end{itemize}

\paragraph{Model Statistics.}
We report for each prediction model the following: 
\begin{itemize}[nosep,topsep=1.5pt]
    \item DS: The total number of decision sets
    \item EX: The total number of decision sets that timed out
\item IR: Number of instances that admit at least one redundant rule 
 \item IL: Number of instances that admit at least one locally redundant literal    
    \item PL: Percentage of locally redundant literals 
    for instances that admit at least one locally redundant literal 
    \item IG: Number of instances that admit at least one globally redundant literal
    \item PG: Percentage of globally redundant literals 
    for instances that admit at least one globally redundant literal 
\end{itemize}

In the rest of the section, 
we focus on the most important observations. 

\paragraph{Summary.}
Tables~\ref{tab:summary1} and Tables~\ref{tab:summary2} give the full statistics for each 
learning model\footnote{{The detailed results can be found at
 \url{https://siala.github.io/data/2025-ecml/}}}.
Instance-related statistics are averaged for each model.  
Decision sets that are worse than random guess are ignored. 
Instance statistics are averaged for each prediction model. 
Only the results of the experiments that did not reach the timeout are reported. 
The time to generate the background knowledge (TB) is often less than a second. 
The time to find redundant rules (TC) is often few seconds, 
except for some decision sets where it took about a minute. 
 The runtime to find all literal redundancies (TR), however, 
 is much longer. 
 To observe this more accurately, we present in
  Figure~\ref{fig:tr} its box plot across all models. 
  The x-axis is the time in seconds and the y-axis is the TR value for each model. 
  This is expected because every 
  literal is checked for redundancy
   by application of Corollary~\ref{corollary:full}.

\begin{table}[ht]
    \centering
    \caption{Summary of the Results (1)\label{tab:summary1}}
          \begin{tabular}{|c|c|c|c|c|c|c|c|}
\hline
  & DS & EX & NR & NP & TO & TB & TC  \\
\hline
\hline
sklearn classification & 196 & 21 & 35 & 3 & 0 & 0 & 0  \\
\hline
sklearn regression & 28 & 8 & 70 & 69 & 0 & 0 & 8    \\
\hline
IAI classification & 177 & 0 & 17 & 4 & 0 & 0 & 0   \\
\hline
IAI regression & 28 & 0 & 56 & 53 & 0 & 0 & 3   \\
\hline
Orange & 127 & 12 & 175 & 2 & 76 & 0 & 2  \\
\hline
Boomer & 180 & 16 & 97 & 49 & 0 & 0 & 0  \\
\hline
\hline
\end{tabular}

\end{table}

\begin{table}[ht]
    \centering
    \caption{Summary of the Results (2)\label{tab:summary2}} 
          \begin{tabular}{|c|c|c|c|c|c|c|c|c|c|}
\hline
  &  TR & BS & RS & RM & IR & IL & PL & IG & PG  \\
\hline
\hline
sklearn classification  & 94 & 23 & 233 & 5 & 0 & 123 & 7 & 175 & 33  \\
\hline
sklearn regression & 879 & 85 & 541 & 7 & 0 & 20 & 18 & 0 & 0  \\
\hline
IAI classification  & 19 & 13 & 96 & 4 & 0 & 82 & 3 & 135 & 10  \\
\hline
IAI regression  & 333 & 77 & 367 & 5 & 0 & 25 & 15 & 2 & 0  \\
\hline
Orange  & 10 & 12139 & 405 & 3 & 16 & 1 & 0 & 13 & 0  \\
\hline
Boomer  & 21 & 30 & 180 & 11 & 42 & 6 & 0 & 20 & 0  \\
\hline
\hline
\end{tabular}

\end{table}

\begin{figure}[t]
    \centering
    \begin{minipage}{1\textwidth}
        \centering
        \caption{Box Plot of TR\label{fig:tr}.}
        \includegraphics[width=\linewidth]{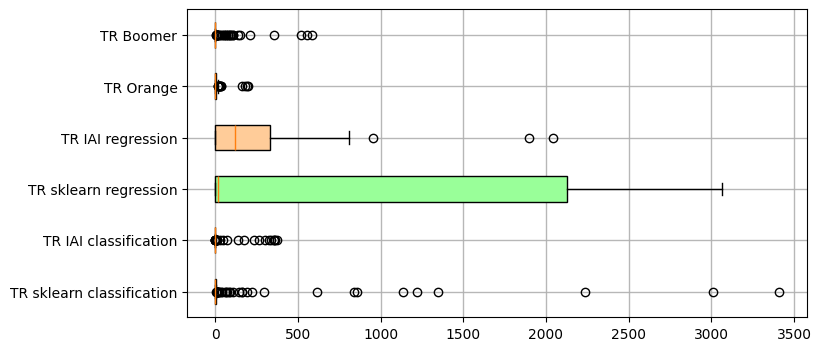}
    \end{minipage}
\end{figure}

\subsection{Rule and Literal Redundancy}
We are interested in this section in the evaluation of the 
presence of redundant rules and locally/globally redundant literals,
their correlations with other characteristics, 
as well as the efficiency of our approach. 

\paragraph{Redundancy.} 
We note first that  rule redundancy does not occur often 
as we can see in column IR in Table~\ref{tab:summary2}
except for Boomer. 
Figure~\ref{fig:lrdd} represents a box plot of the percentage of
 local (respectively global)
 redundancies PL (respectively PG) for all learning models. 
Orange and Boomer barely exhibit literal redundancies 
(see columns IL and IG in Table~\ref{tab:summary2}). 
Regression models did not show any global literal 
redundancy except for 
2 cases with IAI regression trees. 
This is expected because for a literal to be globally redundant, 
there should be at least two rules predicting the exact same 
value, which is rare in regression. 
Classification trees, however, exhibit a noticeable presence of global redundancy 
(`PG IAI classification' and 'PG sklearn classification'). 
Figure~\ref{fig:lrdd} shows a significant presence of local redundancy 
in all tree models. 
We note that for each prediction task (regression, classification), 
IAI trees have fewer 
local/global redundancies than sklearn trees (in terms of the median and the maximum values). 
This suggests that optimal trees tend to reduce redundancy. 
\begin{figure}[t]
    \centering
    \caption{Local/Global Redundancies\label{fig:lrdd}. The X-axis
     represents the values of PL/PG}
    \includegraphics[width=1\textwidth]{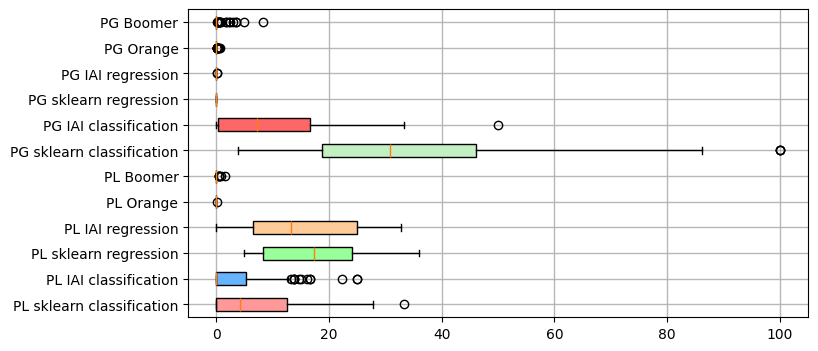}
\end{figure}

\paragraph{Correlations.}
We looked into different correlations between 
local/global redundancy and other statistics. 
We report the results only for models
where at least $30\%$ of 
its decision trees/sets exhibit local/global redundancy. 
There was a moderate negative correlation of global redundancy
with the number of prediction outcomes 
(i.e., size of $\Outcome$)
 with scikit-learn and IAI classification trees. 
Figure~\ref{fig:corr:pl}
shows the most important correlations of 
local redundancy with
the statistics mentioned earlier. 
For instance, on the x-axis, with NR we show the 
correlation 
of the local redundancy 
values found by each model with the number of rules. 
Clearly local redundancy with scikit-learn regression trees 
    highly correlates with NR, NP, BS, RM.  
    IAI regression trees has the same tendency.

\begin{figure}[t]
    \centering
    \caption{Pearson Correlations of Local Redundancies (PL)\label{fig:corr:pl}}
    \includegraphics[width=\textwidth]{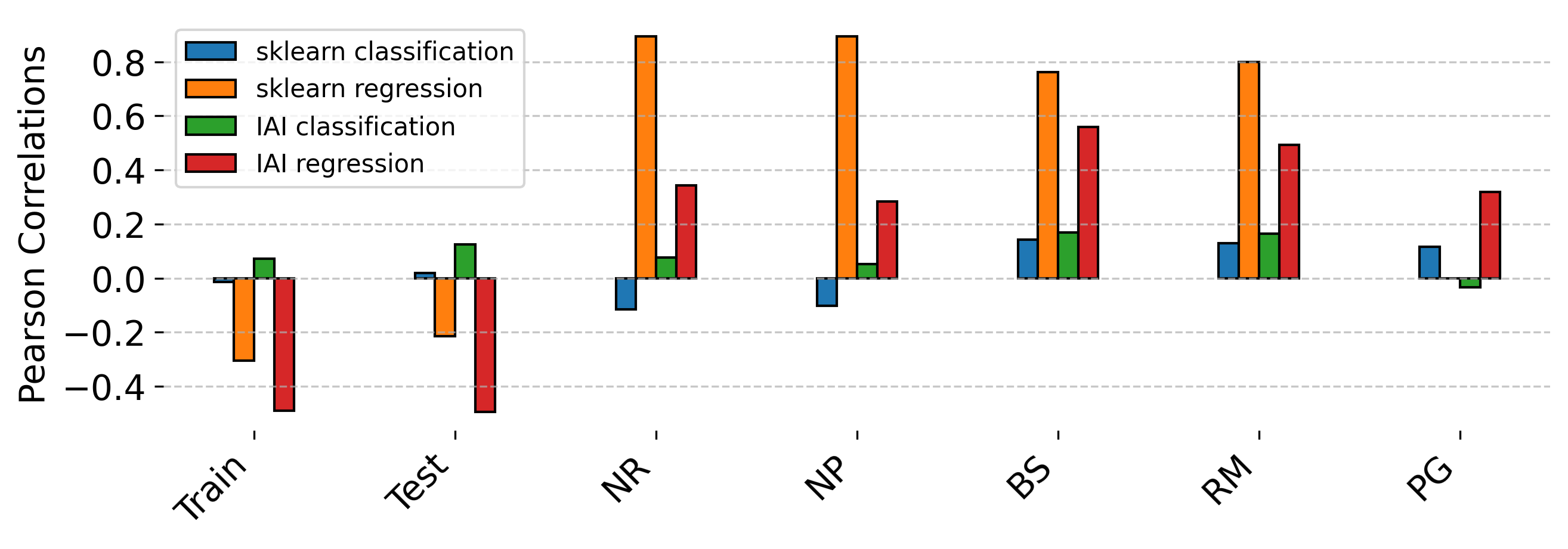}
\end{figure}



\subsection{Negative Overlap}

\mnoteF{Experiments still running. The plots will be updated later.}

\mnoteF{This will be updated later depending on boomer's results. But I put here what needs to be reported.}

We evaluate the presence of negative overlap
on Orange and Boomer 
and their relationship with relevant statistics. 
Boomer timed out on $4$ datasets (emotions, image, scene, yeast) after the one hour 
time limit.
The results are summarized in Table~\ref{tab:summary_overlaps}
for instances that did not timeout. 
The most important observation is the high 
percentage of negative overlap (column PO). 
Indeed, with Boomer decision sets, 
almost every pair of rules with different predictions overlap. 
 Such an observation is worth reporting to 
  the user. 
  The results are less spectacular 
  for orange with an average close to $50\%$
but still worth noting. 
The runtime to find all negative overlap per instance is not negligible. 

\begin{table}[t]
    \centering
    \caption{Summary of the Negative Overlap Results\label{tab:summary_overlaps}}
        \resizebox{\textwidth}{!}
        {
    \begin{tabular}{|c|c|c|c|c|c|c|c|c|c|c|c|c|c|c|}
\hline
  & DS & EX & Train & Test & NR & NP & IR  & TO & TR & BS & RS & RM & PO  \\
\hline
\hline
Orange & 127 & 12 & 70 & 71 & 175 & 2 & 16  & 76 & 10 & 12139 & 405 & 3 & 50  \\
\hline
Boomer & 180 & 16 & 95 & 95 & 97 & 49 & 42  & 0 & 21 & 30 & 180 & 11 & 99  \\
\hline
\hline
\end{tabular}
  
        }
\end{table}

\paragraph{Negative Overlap in Boomer.}
%
As the results in this section confirm (see~\cref{tab:summary_overlaps}),
Boomer~\cite{fuernkranz-ecml20} exhibits extensive negative overlap.
This is to be expected.
In contrast with the approach outlined in this paper, where negative
overlap is targeted as a reason for non-interpretability, Boomer
exploits boosting (and as a result negative overlap) to build
high-accuracy rule ensembles. The theoretical and practical advantages
of boosting are well-known~\cite{freund-ic95,schapire-ml02}, namely
to allow the learning of strong classifiers. As argued in this paper,
a downside of negative overlap (and so of rule ensembles) is that
finding explanations becomes a computationally-hard challenge.
Our experiments are reported for completeness, and confirm the 
previous remarks.

\subsection{Application to Anchor Explanations} 

\mnoteF{I added this short paragraph to describe the relation to Anchor explanations.}

Anchors are well-known model-agnostic explanations 
representing local, “sufficient” conditions for predictions~\cite{18-aaai-Ribeiro}. 
The question we ask here addresses precisely one of the open questions in~\cite{18-aaai-Ribeiro}: 
How to find potentially conflicting anchors?  
To answer this question, we generate anchors for different inputs, 
then apply our approach to find negative overlap between anchors.

We reproduced the exact experiments in~\cite{18-aaai-Ribeiro} with the three datasets: 
\emph{adult} for predicting
whether a person makes > $50K$ annually; 
\emph{rcdv} for predicting recidivism for individuals released
from prison; 
and \emph{lending} for predicting whether a loan on
the Lending Club website will turn out bad. 
For each dataset, four models are used for prediction: 
boosted trees with xgboost, random forest, 
logistic regression,  and neural networks. 
{Each model is built using the exact configuration in the original paper~\cite{18-aaai-Ribeiro}.}
{For each dataset and each model}, we generate all anchors 
of the validation set and look for all negative overlap. 

\begin{table}[t]
    \centering
    \caption{Anchor Experiments\label{fig:anch}}
    \begin{tabular}{|| l | l || c | c | c | c | c | c|  c ||}
\toprule
Learner & Dataset & Train & Test & NR & TO & NO & PO & RM \\ \hline
\midrule
xgboost & recidivism
 & 92.39 & 74.33 & 333 & 0 & 87 & 0.31 & 17 \\
randomforest & recidivism
 & 93.52 & 75.46 & 321 & 0 & 65 & 0.25 & 17 \\
logistic & recidivism
 & 62.59 & 60.00 & 196 & 0 & 735 & 7.81 & 12 \\
nn & recidivism
 & 87.47 & 71.49 & 341 & 1 & 150 & 0.52 & 17 \\ \hline 
xgboost & lending
 & 90.10 & 82.89 & 260 & 0 & 384 & 2.47 & 15 \\
randomforest & lending
 & 91.25 & 83.60 & 278 & 0 & 207 & 1.18 & 15 \\
logistic & lending
 & 82.56 & 83.51 & 50 & 0 & 54 & 9.38 & 14 \\
nn & lending
 & 88.00 & 82.54 & 159 & 0 & 66 & 1.07 & 16 \\\hline 
xgboost & adult
 & 90.35 & 84.26 & 565 & 8 & 3195 & 4.03 & 14 \\
randomforest & adult
 & 93.52 & 85.60 & 558 & 7 & 2534 & 3.27 & 13 \\
logistic & adult
 & 83.00 & 82.98 & 378 & 3 & 2788 & 7.86 & 13 \\
nn & adult
 & 92.47 & 83.62 & 597 & 11 & 3212 & 3.61 & 14 \\\hline 
\bottomrule
\end{tabular}
  
\end{table}

\jnoteF{We need more detail...\\
  What is Train? What is Test? What is \#R? What is \#O? What is \%O?
  Finally, what is RM?
}

\mnoteF{These are statistics that I used across all experiments. They are defined at the very beginning of this section.}

\jnoteF{Ok. Then, highlight that whenever summarizing the results in
  tables/plots.}

Table~\ref{fig:anch} presents the results for each dataset and 
each model. 
As we can see, negative overlap in Anchor explanations is present 
in all use cases. 
 Often, anchors of random forests 
 exhibit the lowest percentage of negative overlap, 
 whereas those of logistic regression have the highest percentage.
We also observe that the best (and respectively, worst) 
models in terms of prediction quality 
tend to have the lowest (respectively, highest)
 percentages of negative overlap.
 These observations suggest that the quality of Anchor 
 explanations depends on the prediction quality of the learner/model.

\section{Conclusions}~\label{sec:conc}

This paper investigates the occurrence of negative facets of decision
sets, namely negative overlap and (global or local) literal
redundancy.
Dedicated algorithms for their identification are proposed.
Furthermore, the paper reveals the tight relationship between decision 
sets for which manual explanations can be devised, and the
non-existence of the aforementioned negative facets.
A first set of experiments confirms that these negative facets occur
ubiquitously in existing implementations of decision sets, thus
rendering unrealistic the manual identification of explanations.
A second set of experiments confirms that the explanations obtained
with the well-known explainer Anchors will also exhibit the same
negative facets.

\paragraph{Acknowledgements.}
Mohamed Siala would like to thank INSA Toulouse for funding his
research visit to the University of Lleida.
This work was supported in part by the
MCIN/AEI/10.13039/501100011033/FEDER, UE under the project
PID2022-139835NB-C22.
This work was supported in part by the Spanish Government under
grant PID 2023-152814OB-I00.
The authors at University of Lleida would like to thank the 
Catalan Government for the quality accreditation given to their 
research group GREiA (2021 SGR 1615).
%
%
%
%

\bibliographystyle{apalike}  
\bibliography{references,xtrarefs}  

\end{document}